\documentclass{esannV2}

\usepackage{graphicx,url,hyperref}
\usepackage{xcolor,wrapfig,algorithmic}
\usepackage[scriptsize,tight]{subfigure}
\usepackage{tabularx}
\usepackage{booktabs}
\usepackage{amssymb,amsmath,mathtools}
\usepackage{amsthm}
\usepackage{xspace}
\usepackage{listings}

\usepackage{amsthm}

\newtheorem{theorem}{Theorem}
\newtheorem{corollary}[theorem]{Corollary}
\newcolumntype{C}[1]{>{\centering\arraybackslash}p{#1}}
\newcolumntype{R}[1]{>{\raggedleft\arraybackslash}p{#1}}
\newcolumntype{L}[1]{>{\raggedright\arraybackslash}p{#1}}

\newcommand{\vk}{v_2}
\newcommand{\Cmap}{C}

\begin{document}

\date{}

\title{Adaptive Search in Collatz Exponent-Code Space via 2-adic and 3-adic Constraints}

\author{
Oliver Kramer
\vspace{0.3cm}\\
Computational Intelligence\\
Department of Computer Science\\
University of Oldenburg\\
\url{oliver.kramer@uol.de}
}

\maketitle
\thispagestyle{empty}

\begin{abstract}
We study a symbolic search space for the Collatz conjecture: finite exponent
codes of the accelerated map. Such a code records the number of divisions by
two after each \(3n+1\) step. Every code induces three diagnostics: real drift,
a 2-adic start representative, and a 3-adic endpoint representative. We call
their combination the 2--3--\(\infty\) diagnostic. A counterexample-like code
should have near-critical drift, small 2-adic start representatives, and
endpoint representatives compatible with the growth bound \((3/2)^k\). We
prove that every infinite code generated by a fixed positive integer has
asymptotically vanishing 2-adic and 3-adic residue rates. Experiments compare
random critical codes, mechanical critical codes, and adaptive evolutionary
search for \(k=100,200,400\). Adaptive search improves finite-length trade-offs,
but all methods retain clearly positive residue rates. The contribution is not
a Collatz verification method, but a symbolic diagnostic framework for probing
obstruction structures in exponent-code space.
\end{abstract}

\section{Introduction}

The Collatz conjecture states that repeated application of \(T(n)=n/2\) for
even \(n\), and \(T(n)=3n+1\) for odd \(n\), eventually reaches \(1\) for every
positive integer \(n\). Most computational approaches test large ranges of
starting values. Here, we instead search over symbolic trajectory structures.

We use the accelerated Collatz map on odd integers,
\[
\Cmap(n)=\frac{3n+1}{2^{\vk(3n+1)}} ,
\]
where \(\vk(m)\) is the largest exponent such that \(2^{\vk(m)}\mid m\).
For a trajectory \(x_{k+1}=\Cmap(x_k)\), define
\(a_{k+1}=\vk(3x_k+1)\). The finite sequence
\((a_1,\ldots,a_k)\) is an exponent code.

\paragraph{Example.}
For \(x_0=5\), one obtains \(3\cdot5+1=16=2^4\), hence \(a_1=4\) and
\(x_1=1\). Since \(3\cdot1+1=4=2^2\), the accelerated orbit then stays at
\(1\), giving the code
\[
(4,2,2,2,\ldots).
\]
Thus exponent codes encode odd-to-odd Collatz dynamics symbolically.

A finite code determines a start residue modulo \(2^{A_k}\), where
\(A_k=a_1+\cdots+a_k\), and an endpoint residue modulo \(3^k\). A
counterexample-like code should therefore satisfy three compatibility
conditions: near-critical average exponent, small forced 2-adic start
representative, and 3-adic endpoint representative compatible with real growth.
We call this the 2--3--\(\infty\) diagnostic.

\section{Related Work}

The (3x+1) problem has been studied from probabilistic, dynamical-systems,
and symbolic perspectives. Standard references include the survey by
Lagarias~\cite{lagarias1985}, the reference volume edited by
Lagarias~\cite{lagarias2010}, and the dynamical-systems treatment of
Wirsching~\cite{wirsching1998}. The accelerated map on odd integers and
symbolic trajectory encodings are well established in this literature.

Our work differs by treating finite exponent codes themselves as search
objects. Rather than exploring starting integers, we explore symbolic
trajectory structures and evaluate them through a combination of real-valued
drift, 2-adic start representatives, and 3-adic endpoint representatives.

The adaptive component is inspired by evolutionary computation, particularly
evolution strategies~\cite{beyer2002} and evolutionary optimization methods
\cite{eiben2003}. The goal is not to prove a number-theoretic statement by
optimization, but to use adaptive search as a tool for exploring structural
properties of exponent-code space.

\section{Exponent-Code Diagnostics}

We consider accelerated trajectories
\[
x_{k+1}=\Cmap(x_k)=\frac{3x_k+1}{2^{a_{k+1}}},
\qquad a_{k+1}=\vk(3x_k+1).
\]
For a finite code \((a_1,\ldots,a_k)\), define
\(A_k=\sum_{i=1}^{k}a_i\) and \(A_0=0\). Repeated substitution gives
\[
x_k=\frac{3^k x_0+B_k}{2^{A_k}},
\qquad
B_k=\sum_{j=0}^{k-1}3^{k-1-j}2^{A_j}.
\]
Thus the code controls both the factor \(3^k/2^{A_k}\) and the additive offset
\(B_k\).

\subsection{Real Drift}

The critical average exponent is \(\alpha=\log_2 3\), since \(A_k/k=\alpha\)
makes \(3^k/2^{A_k}=1\). Codes with larger averages are contracting; codes with
smaller averages are expanding. We use
\[
d_k=\left|A_k/k-\alpha\right|
\]
as drift diagnostic.

\subsection{The 2-adic Start Representative}

A code also constrains possible starting values. From the affine expression,
any realizing start \(x_0\) must satisfy
\[
3^kx_0+B_k\equiv 0 \pmod{2^{A_k}} .
\]
Since \(3^k\) is invertible modulo \(2^{A_k}\), define the least nonnegative
representative \(r_k\) by
\[
r_k\equiv
-\sum_{j=0}^{k-1}3^{-(j+1)}2^{A_j}
\pmod{2^{A_k}},
\qquad 0\leq r_k<2^{A_k}.
\]
If an infinite code is generated by a fixed positive integer \(n\), then
eventually \(r_k=n\). We measure start-residue growth by
\[
\rho_r(k)=\frac{\log(1+r_k)}{k}.
\]

\subsection{The 3-adic Endpoint Representative}

The code also determines an endpoint representative. Since
\(2^{A_k}x_k\equiv B_k \pmod{3^k}\), define \(M_k\) by
\[
M_k\equiv
2^{-A_k}
\sum_{j=0}^{k-1}3^{k-1-j}2^{A_j}
\pmod{3^k},
\qquad 1\leq M_k\leq 3^k.
\]
If the code is realized by a positive orbit starting at \(n\), then
\(x_k\equiv M_k \pmod{3^k}\). Moreover,
\(x_{k+1}+1\leq \frac32(x_k+1)\), hence
\(x_k+1\leq(n+1)(3/2)^k\). Thus a genuine orbit eventually satisfies
\(M_k=O((3/2)^k)\). We measure endpoint incompatibility by
\[
\rho_M(k)=
\frac{\log\left(1+M_k/(3/2)^k\right)}{k}.
\]
The quantities \(d_k\), \(\rho_r(k)\), and \(\rho_M(k)\) form the finite-prefix
2--3--\(\infty\) diagnostic.

\subsection{Necessary Vanishing of Residue Rates}

The residue diagnostics are necessary compatibility tests: a genuine orbit
must eventually have a fixed start representative and an endpoint bounded by
the real growth rate.

\begin{theorem}[Necessary vanishing of the residue rates]
\label{thm:vanishing-rates}
Let \((a_1,a_2,\ldots)\) be the infinite exponent code generated by the
accelerated Collatz orbit of a fixed positive odd integer \(n\). Let \(r_k\)
and \(M_k\) be the representatives induced by the prefix
\((a_1,\ldots,a_k)\). Then
\[
\lim_{k\to\infty}\rho_r(k)=0
\qquad\text{and}\qquad
\lim_{k\to\infty}\rho_M(k)=0.
\]
\end{theorem}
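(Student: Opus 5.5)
Both limits will follow from two facts: for a genuine orbit the residues $r_k$ and $M_k$ eventually coincide with integers that do not grow faster than the allowed rates, and dividing a bounded or slowly growing logarithm by $k$ gives $0$.

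\textbf{Start residue.} The orbit of $n$ realizes the code, so
\[
3^k n+B_k=2^{A_k}x_k\equiv 0 \pmod{2^{A_k}}.
\]
Hence $n$ satisfies the congruence defining $r_k$, and $n\equiv r_k \pmod{2^{A_k}}$. Every $a_i\ge 1$, so $A_k\ge k$. Once $2^{k}>n$, the integer $n$ lies in $[0,2^{A_k})$. By uniqueness of the least nonnegative representative, $r_k=n$. Therefore $\rho_r(k)=\log(1+n)/k$ for all large $k$, and this tends to $0$.

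\textbf{Endpoint residue.} From the same affine identity, $2^{A_k}x_k=3^kn+B_k\equiv B_k \pmod{3^k}$. Since $2^{A_k}$ is invertible modulo $3^k$, we get $x_k\equiv M_k \pmod{3^k}$. I will first check the growth bound stated before the theorem: $x_{k+1}\le (3x_k+1)/2$ gives $x_{k+1}+1\le \tfrac32(x_k+1)$, so $x_k+1\le (n+1)(3/2)^k$. In particular $1\le x_k< 3^k$ as soon as $(n+1)(3/2)^k\le 3^k$, that is $2^k\ge n+1$.

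The one delicate point is the normalization $1\le M_k\le 3^k$. The residue $M_k$ and the integer $x_k$ both lie in $\{1,\dots,3^k\}$ and agree modulo $3^k$, so for large $k$ they are equal: $M_k=x_k$. Then
\[
0\le \rho_M(k)\le \frac{\log\bigl(1+(n+1)\bigr)}{k}\longrightarrow 0 .
\]

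Neither step uses anything beyond the affine formula for $x_k$, the invertibility of $3^k$ modulo $2^{A_k}$ and of $2^{A_k}$ modulo $3^k$, and the bound $A_k\ge k$. The main (mild) obstacle is making sure the representatives are pinned down to the actual integers $n$ and $x_k$, rather than only congruent to them. That requires the explicit threshold $2^k>n$, which handles both parts simultaneously.
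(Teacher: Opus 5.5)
Your proof is correct and follows essentially the same route as the paper: you pin down $r_k=n$ using $A_k\ge k$ and $n<2^{A_k}$, and you pin down $M_k=x_k$ using the growth bound $x_k+1\le(n+1)(3/2)^k\le 3^k$ once $2^k\ge n+1$, which gives $\rho_M(k)\le\log(n+2)/k$. Beyond the paper's argument, you also derive both congruences explicitly from the affine identity and state the explicit threshold, which the paper leaves implicit.
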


\begin{proof}
Because the code is generated by \(x_0=n\), the start congruence is satisfied
by \(n\), so \(r_k\equiv n \pmod{2^{A_k}}\). Since \(A_k\geq k\), eventually
\(n<2^{A_k}\). Hence the least nonnegative representative is \(r_k=n\), and
\[
\rho_r(k)=\frac{\log(1+n)}{k}\to0.
\]

For the endpoint, the realized orbit satisfies \(x_k\equiv M_k \pmod{3^k}\).
Since \(a_{k+1}\geq1\),
\[
x_{k+1}+1\leq \frac32(x_k+1),
\]
and therefore \(x_k+1\leq(n+1)(3/2)^k\). Since
\[
\frac{(n+1)(3/2)^k}{3^k}=\frac{n+1}{2^k}\to0,
\]
we have \(x_k<3^k\) for all sufficiently large \(k\), so \(M_k=x_k\)
eventually. Consequently,
\[
\frac{M_k}{(3/2)^k}\leq n+1
\]
for all sufficiently large \(k\), and
\[
\rho_M(k)\leq \frac{\log(n+2)}{k}\to0.
\]
\end{proof}
\begin{corollary}
\label{cor:positive-liminf}
If an infinite exponent code satisfies
\(\liminf_{k\to\infty}\rho_r(k)>0\) or
\(\liminf_{k\to\infty}\rho_M(k)>0\), then it cannot be generated by the
accelerated Collatz orbit of any fixed positive integer.
\end{corollary}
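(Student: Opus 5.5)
The corollary is the contrapositive of Theorem~\ref{thm:vanishing-rates}, so the plan is to argue by contradiction. Fix an infinite exponent code \((a_1,a_2,\ldots)\) with \(\liminf_{k\to\infty}\rho_r(k)>0\) or \(\liminf_{k\to\infty}\rho_M(k)>0\). Suppose, for contradiction, that it is generated by the accelerated orbit of some fixed positive integer \(n\). Since \(\Cmap\) acts on odd integers, \(n\) is odd, and the code is exactly the one in the hypothesis of Theorem~\ref{thm:vanishing-rates}.

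Here \(r_k\) and \(M_k\) are the representatives induced by the prefixes. By Theorem~\ref{thm:vanishing-rates},
\[
\lim_{k\to\infty}\rho_r(k)=0
\qquad\text{and}\qquad
\lim_{k\to\infty}\rho_M(k)=0.
\]
When a limit exists, it equals the \(\liminf\), so both \(\liminf\) values are \(0\). This contradicts whichever strict positivity assumption holds. Hence no fixed positive integer generates the code.

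The only point needing care is matching the objects. The diagnostics \(\rho_r,\rho_M\) are defined purely from the prefixes of the code, via the congruences for \(r_k\) and \(M_k\). So they are the same quantities whether we view the code abstractly or as generated by the orbit of \(n\), and no further estimates are required. There is no substantive obstacle beyond this identification; all the analytic work (\(r_k=n\) eventually, and \(M_k=x_k\le (n+1)(3/2)^k\) eventually) is already contained in the proof of Theorem~\ref{thm:vanishing-rates}.
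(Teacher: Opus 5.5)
Your proposal is correct and matches the paper, which states the corollary without separate proof as the contrapositive of Theorem~\ref{thm:vanishing-rates}: a genuine orbit forces $\lim_{k\to\infty}\rho_r(k)=\lim_{k\to\infty}\rho_M(k)=0$, so both $\liminf$ values vanish. Your remark that the generating integer must be odd (since $\Cmap$ and exponent codes are defined on odd integers) also correctly bridges ``positive integer'' in the corollary and ``positive odd integer'' in the theorem.
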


The 2--3--\(\infty\) diagnostic combines three viewpoints. The drift \(d_k\)
is the real component and measures whether \(3^k/2^{A_k}\) is near the neutral
threshold. The representative \(r_k\) is the 2-adic component and measures how
large the forced start residue is. The representative \(M_k\) is the 3-adic
component and measures whether the forced endpoint residue is compatible with
the real growth bound \((3/2)^k\).
For a genuine positive orbit, Theorem~\ref{thm:vanishing-rates} implies
\[
\rho_r(k)\to0,\qquad \rho_M(k)\to0.
\]
Thus positive residue rates indicate incompatibility with any fixed finite
starting value. The central question is therefore not merely whether one can
construct near-critical exponent averages, but whether near-criticality can be
combined with simultaneous 2-adic and 3-adic compatibility.

\section{Adaptive Search and Baselines}

We compare three strategies. The first baseline samples random critical binary
codes with \(a_i\in\{1,2\}\) and
\(\Pr(a_i=2)=\log_2 3-1\), so that
\(\mathbb{E}[a_i]=\log_2 3\).
The second baseline samples mechanical critical codes. Let
\(\beta=\log_2 3-1\). For random \(\theta\in[0,1)\), generate $
a_i=
1+\lfloor i\beta+\theta\rfloor
-\lfloor(i-1)\beta+\theta\rfloor .$
These codes distribute the \(2\)-exponents evenly and are nearly critical by
construction.
The third strategy is adaptive evolutionary search over fixed-length codes
with \(a_i\in\{1,\ldots,a_{\max}\}\). Each code is a genome. Selection favors
low diagnostic score, crossover recombines segments, mutations introduce
variation, and drift repair moves candidates toward \(A_k/k\approx\log_2 3\).
Mechanical-block injection adds structured critical fragments.
Candidates are ranked by $S=d_k+\rho_r(k)+\rho_M(k)$,
where lower values indicate more counterexample-like codes. The score is used
only as a finite-prefix search diagnostic, not as a proof criterion.

\section{Experiments}

We evaluate all methods at \(k\in\{100,200,400\}\) with budget
\(N_{\mathrm{eval}}=1000k\). For every candidate, we compute \(A_k/k\), \(d_k\),
\(\rho_r(k)\), \(\rho_M(k)\), and \(S\). Table~\ref{tab:scaling} reports
representative best-of-run values under equal evaluation budgets.

\begin{table}[htb!]
\centering
\setlength{\tabcolsep}{6pt}
\renewcommand{\arraystretch}{1.08}
\begin{tabular}{llrrrr}
\toprule
\(k\) & Method & \(d_k\) & \(\rho_r\) & \(\rho_M\) & \(S\) \\
\midrule
100 & Random critical & 0.03496 & 0.96950 & 0.58827 & 1.59274 \\
100 & Mechanical      & 0.00496 & 1.03977 & 0.63774 & 1.68247 \\
100 & Adaptive search & 0.00504 & 0.94940 & 0.54044 & 1.49488 \\
\midrule
200 & Random critical & 0.01996 & 1.02255 & 0.63092 & 1.67344 \\
200 & Mechanical      & 0.00004 & 1.06915 & 0.66366 & 1.73284 \\
200 & Adaptive search & 0.00004 & 1.04606 & 0.64057 & 1.68666 \\
\midrule
400 & Random critical & 0.00246 & 1.06891 & 0.66515 & 1.73653 \\
400 & Mechanical      & 0.00004 & 1.08330 & 0.67781 & 1.76115 \\
400 & Adaptive search & 0.00754 & 1.07076 & 0.66007 & 1.73838 \\
\bottomrule
\end{tabular}
\caption{Scaling behavior for fixed code lengths. Each method uses \(1000k\)
candidate evaluations. Lower values are better.}
\label{tab:scaling}
\end{table}

At \(k=100\), adaptive search obtains the best score, \(S=1.49488\), and the
lowest residue rates. At \(k=200\), random critical sampling obtains the best
unweighted score, while adaptive search preserves nearly exact drift and
improves over the mechanical baseline. At \(k=400\), random critical sampling
and adaptive search are very close.
The strongest trend is persistence of the diagnostic obstruction. Across all
lengths and methods, the rates remain clearly positive, with
\(0.95\leq\rho_r\leq1.08\) and \(0.54\leq\rho_M\leq0.68\). None of the methods
approaches the necessary regime \(\rho_r(k)\to0\) and \(\rho_M(k)\to0\).
Adaptive search improves its initial populations, but the improvements become
smaller as \(k\) grows.

\begin{figure}[t]
\centering
\includegraphics[width=0.83\linewidth]{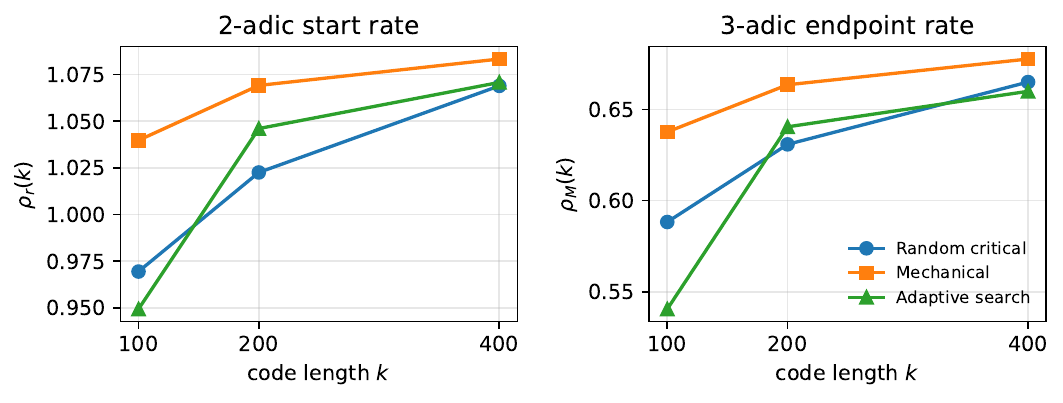}
\caption{Scaling of the diagnostic residue rates. Left: 2-adic start rate
\(\rho_r\). Right: 3-adic endpoint rate \(\rho_M\). Both rates remain clearly
positive over the investigated length range.}
\label{fig:scaling-rates}
\end{figure}

Figure~\ref{fig:scaling-rates} shows that the residue rates do not trend toward
zero over the investigated range. Thus, although adaptive search uses
recombination, mutation, drift repair, and domain-informed variation, it does
not escape the positive 2-adic and 3-adic rates observed in simpler baselines.

\section{Conclusion}

We introduced a symbolic framework for accelerated Collatz exponent codes based
on real drift and 2-adic and 3-adic representatives. We proved that codes
generated by fixed positive integers satisfy
\[
\rho_r(k),\rho_M(k)\to 0.
\]
Experiments for \(k=100,200,400\) show persistently positive residue rates.
Adaptive search improves finite-length results but does not approach this
necessary regime. This does not prove the Collatz conjecture, but indicates that
simultaneous real, 2-adic, and 3-adic compatibility is difficult to achieve.
Future work should examine larger lengths and provable lower bounds.

\begin{footnotesize}

\end{footnotesize}

\end{document}